\documentclass[letterpaper]{article} % DO NOT CHANGE THIS
\usepackage{aaai2027}  % DO NOT CHANGE THIS
\usepackage[hyphens]{url}  % DO NOT CHANGE THIS
\usepackage{graphicx} % DO NOT CHANGE THIS
\usepackage{natbib}  % DO NOT CHANGE THIS AND DO NOT ADD ANY OPTIONS TO IT
\usepackage{caption} % DO NOT CHANGE THIS AND DO NOT ADD ANY OPTIONS TO IT
\usepackage{booktabs}
\usepackage{multirow}

\usepackage{amsmath}
\usepackage{amssymb}
\usepackage{amsthm}

\newcommand{\method}{AuroOFT}
\newcommand{\qoft}{QOFT}
\newcommand{\R}{\mathbb{R}}

\newcommand{\key}[1]{\textbf{\emph{#1}}}

\newtheorem{proposition}{Proposition}
\newtheorem{remark}{Remark}

\title{Beyond Rotations: \method\ for Expressive Quantized Orthogonal Fine-Tuning}

\author{
    Yue Han\textsuperscript{\rm 1},
    Dianlin Wang\textsuperscript{\rm 2},
    Xinkang Li\textsuperscript{\rm 2},
    Jie Zhang\textsuperscript{\rm 1},\\
    Ziyi Chen\textsuperscript{\rm 1},
    Tao Wang\textsuperscript{\rm 1},
    Yexin Cui\textsuperscript{\rm 2},
    Weihong Han\textsuperscript{\rm 3}
}
\affiliations{
    \textsuperscript{\rm 1}College of Systems Engineering, National University of Defense Technology\\
    \textsuperscript{\rm 2}College of Computer Science and Technology, National University of Defense Technology\\
    \textsuperscript{\rm 3}Vernal Institute, China Electronics Corporation\\
    \{hanyue, wangdianlin, xinkangli, zhangjie, wangtao1976, yexincuicyx\}@nudt.edu.cn\\
    chenziyi\_nudt@outlook.com,
    hanwh@pcl.ac.cn
}

\begin{document}

\maketitle
\raggedbottom
\setlength{\textfloatsep}{6pt plus 2pt minus 2pt}
\setlength{\dbltextfloatsep}{6pt plus 2pt minus 2pt}
\setlength{\intextsep}{6pt plus 2pt minus 2pt}

\begin{abstract}
Quantized orthogonal fine-tuning (\qoft) enables parameter-efficient adaptation of low-bit language models by learning structured activation rotations before frozen quantized weights. However, its task-specific updates remain constrained to linear orthogonal transformations, limiting input-dependent nonlinear corrections. We introduce \method, which keeps \qoft\ as a stable quantization-compatible branch while attaching a zero-start gated low-rank nonlinear residual to each adapted linear layer. \method\ maps activations into an RMS-normalized compact latent space and uses adaptive nonlinear bases with bounded or token-dependent gating. The zero-initialized up projection makes \method\ functionally identical to \qoft\ at initialization, while orthogonality remains a branch-level stability property rather than a property of the combined nonlinear layer. Under matched data, optimization, decoding, and parser protocols, \method\ improves Macro-6 over matched \qoft\ by 1.30--2.70 points on the 1.5B/3B Qwen2.5 settings, exceeds QLoRA by 6.52--10.62 points, and saves 32.3--44.7\% trainable parameters relative to QLoRA in representative scales. The small exam-style multiple-choice math set is treated only as a protocol-sensitivity diagnostic. Our code is available at the anonymous repository \url{https://anonymous.4open.science/r/AuroOFT-F3FD}.
\end{abstract}

\section{Introduction}

Parameter-efficient fine-tuning has become the standard route for adapting large models under realistic compute and storage budgets~\citep{lialin2023scaling}. For low-bit reasoning models, however, an effective adapter should behave like a \key{small but faithful correction layer}. It must fit beside a frozen quantized foundation model, preserve useful pretrained geometry, and still express the input-dependent corrections required by mathematical reasoning. Figure~\ref{fig:research_problem} summarizes this tension. Additive adapters can be flexible but need not preserve geometry. Orthogonal adapters can be stable but may restrict the local functions available for adaptation. The central question is therefore not only whether an adapter can be made small, but what kind of correction a small adapter is allowed to express.

\begin{figure}[t]
    \centering
    \includegraphics[width=\columnwidth]{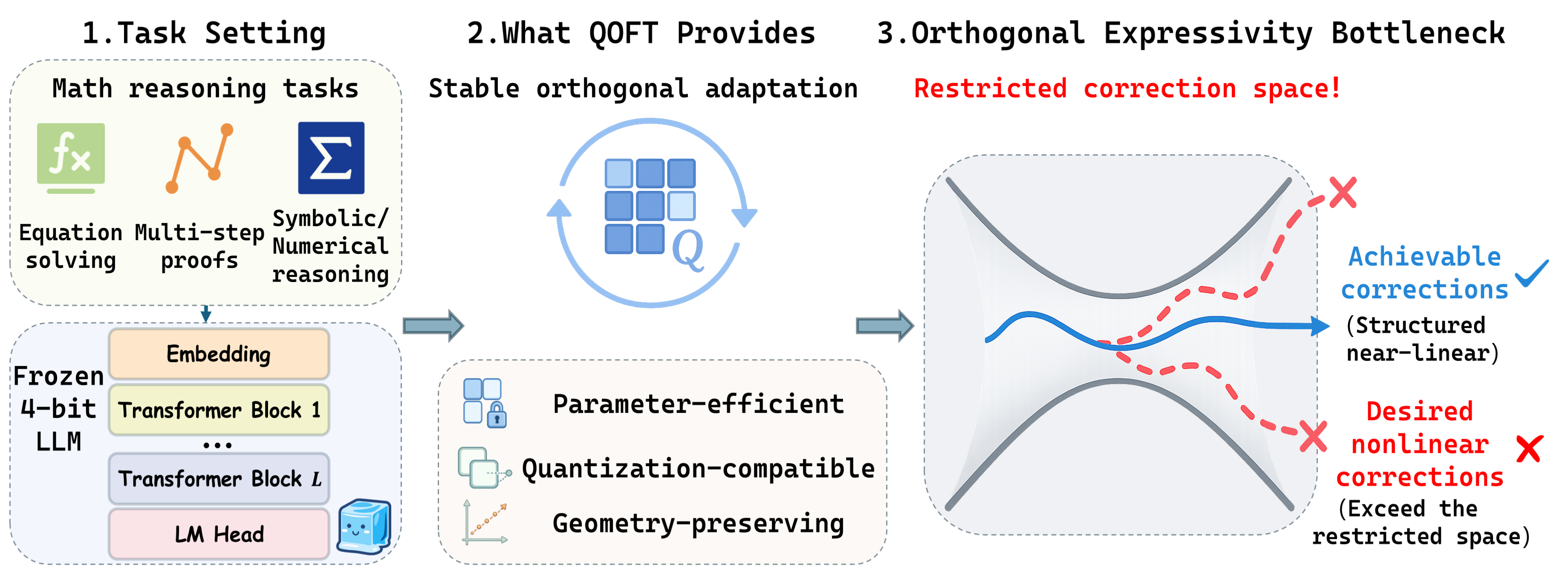}
    \caption{Research problem and challenge. Low-bit reasoning adapters must balance quantized stability with expressive correction capacity.}
    \label{fig:research_problem}
\end{figure}

This issue is especially visible in mathematical reasoning. A model may preserve broad pretrained knowledge and still fail because a local symbolic step, choice elimination, or numerical comparison requires an input-dependent correction. LoRA and QLoRA obtain efficiency by constraining updates to compact low-rank linear subspaces~\citep{hu2022lora,dettmers2023qlora}. OFT, OFTv2, and \qoft\ instead use structure-preserving orthogonal transformations, with \qoft\ applying an input-side Cayley-Neumann rotation before frozen NF4 weights~\citep{qiu2023oft,qiu2025oftv2}. This gives a stable quantized path, but the task-specific change remains an orthogonal linear transformation. Nonlinear low-rank methods such as LoRAN and AuroRA show that compact adapters need not remain linear~\citep{li2024loran,dong2025aurora}, yet they are designed around LoRA-style additive updates rather than the geometric backbone that motivates \qoft.

We call the resulting gap the \key{orthogonal expressivity bottleneck}. QOFT is attractive because its rotations are structured, quantization-compatible, and conservative around frozen low-bit weights. The same design, however, asks each adapted layer to express task change primarily through an input-side orthogonal linear family. For reasoning, this scope can be too narrow. The desired correction may depend on the current token state, the latent arithmetic context, or a nonlinear interaction that is hard to represent by rotation alone. Simply inserting a nonlinear module into the rotation path would blur the geometric meaning of QOFT, while replacing QOFT would discard the stability that made it useful.

\emph{Can we keep \qoft's quantized orthogonal path as the stable backbone while giving each layer expressive nonlinear corrections?} Our answer is \method, a zero-start nonlinear residual augmentation for quantized orthogonal fine-tuning. Each adapted layer keeps the \qoft\ branch unchanged. Activations are rotated by the learned Cayley-Neumann adapter and then passed through the frozen NF4 weight. In parallel, a lower branch applies a down projection, an ANL, a zero-initialized up projection, and a gate. The branches meet by residual addition (Figure~\ref{fig:methodology}).

This decomposition gives \method\ its central principle. It treats \key{orthogonality as a stable quantized backbone, nonlinearity as a zero-start residual correction}. The nonlinear branch expands the layer-wise function family without modifying the quantized base weight or inserting nonlinear operations inside the Cayley-Neumann rotation. Zero initialization of the up projection makes the initial model exactly function-equivalent to \qoft, while the input-dependent residual deliberately turns the combined layer into a nonlinear mapping. Thus, only the \qoft\ branch preserves orthogonal structure. \method\ therefore borrows the nonlinear-low-rank principle behind AuroRA, but changes the augmented object. The nonlinear map is placed beside a quantized orthogonal \qoft\ layer, not in place of it.

This distinction also shapes how the method should be evaluated. Improvements over QLoRA test whether the parameter budget is used more effectively than a standard low-rank additive adapter. Improvements over matched QOFT test whether the nonlinear residual supplies useful expressivity beyond the stable orthogonal carrier. We therefore emphasize comparisons under fixed data, optimizer, decoding, and parser conditions, and treat small or protocol-sensitive benchmarks as diagnostic rather than as standalone evidence for a structural claim.

The broader point is that \method\ is not a request to abandon orthogonality, nor a claim that nonlinear adapters are universally better. It is a more specific proposal about where expressivity should enter a quantized orthogonal adapter. The orthogonal branch remains responsible for stable low-bit adaptation. The residual branch handles local, input-dependent deviations that the rotation family may not capture. This separation lets the method preserve a clean baseline computation at initialization, expose a controlled set of AuroOFT-specific variables for ablation, and keep the scientific comparison focused on the added correction family. In this sense, the method is deliberately conservative in its starting point, but less restrictive in the corrections it can learn during optimization.

This paper contributes a compact framework for turning the above diagnosis into a testable adaptation object.
\begin{itemize}
    \item \textbf{A diagnosis.} The \key{orthogonal expressivity bottleneck} isolates the central limitation. Input-side rotations give \qoft\ a stable quantized path, yet restrict task updates to a structured linear family.
    \item \textbf{An adaptation object.} \method\ instantiates this diagnosis as a zero-start gated nonlinear low-rank residual branch parallel to \qoft, with tanh, normalized spline, and enhanced dual-ANL variants.
    \item \textbf{A boundary analysis.} The formulation makes explicit what is preserved and what is not. It covers exact initialization equivalence to \qoft, containment of the \qoft\ family, branch-limited orthogonality, and non-mergeable residual cost during inference.
    \item \textbf{A matched evaluation.} Mathematical reasoning experiments align data, optimization, decoding, and parsers, while treating the small exam-style multiple-choice math set only as a protocol diagnostic. The protocol uses Macro-6 to separate structural gains over \qoft\ from parameter-efficiency comparisons with QLoRA.
\end{itemize}

\section{Related Work}
\label{sec:related}

\paragraph{Stable and quantized adaptation.}
PEFT methods differ in the adaptation object they choose. These include bottleneck modules~\citep{houlsby2019adapters}, prompts~\citep{li2021prefix,liu2022ptuningv2}, activation scalings~\citep{liu2022ia3}, and low-rank additive matrices~\citep{hu2022lora}. Quantized adaptation makes this choice sharper. GPTQ, SmoothQuant, and AWQ reduce the frozen model footprint~\citep{frantar2023gptq,xiao2023smoothquant,lin2024awq}, while QLoRA, LoftQ, and QA-LoRA attach LoRA-style updates to low-bit weights~\citep{dettmers2023qlora,li2024loftq,xu2024qalora}.

Orthogonal PEFT changes the prior by learning geometry-preserving rotations. OFT introduces orthogonal fine-tuning~\citep{qiu2023oft}, BOFT compresses rotations with butterfly factors~\citep{liu2024boft}, and OFTv2/\qoft\ scales the idea through input-centric Cayley-Neumann rotations before frozen quantized weights~\citep{qiu2025oftv2}. These lines emphasize efficiency, low-bit compatibility, and stability, but still leave a common gap for reasoning. The task correction is usually a linear additive map or an input-side orthogonal rotation. AuroOFT keeps the quantized orthogonal carrier while adding a separate nonlinear correction object.

\paragraph{Expressive residual adaptation.}
Expressive PEFT asks what a compact adapter can represent, not only how many parameters it adds. LoRA analyses and variants such as AdaLoRA, DyLoRA, DoRA, VeRA, FourierFT, MiLoRA, MoSLoRA, and PeriodicLoRA enrich low-rank updates through rank allocation, shared bases, magnitude-direction decomposition, spectral coefficients, mixed subspaces, or staged rank growth~\citep{zeng2023expressive,zhang2023adalora,valipour2023dylora,liu2024dora,kopiczko2024vera,gao2024fourierft,wang2025milora,wu2024moslora,meng2024periodiclora}. LoRAN and AuroRA further insert nonlinear hidden mappings, including tanh and B-spline components, into the low-rank path~\citep{li2024loran,dong2025aurora}. These works show that compact adaptation need not be purely linear, but they are designed around LoRA-style additive updates rather than a quantized orthogonal main branch.

AuroOFT uses their expressivity lesson in a different role. The nonlinear map is zero-started, gated, and residual beside QOFT, so ablations can vary only the correction family while the orthogonal carrier remains fixed. This distinction is important for attribution. The proposed residual is not an alternative backbone, but a controlled way to test whether nonlinear corrections can expand QOFT without changing its quantized orthogonal carrier. The comparison therefore follows a narrow principle. Preserve the stable branch, vary the residual branch, and measure whether the extra expressivity improves adaptation under matched conditions. This principle bridges prior adapter design to the AuroOFT layer formalized next.

\begin{figure*}[t]
    \centering
    \includegraphics[width=.96\textwidth,height=.31\textheight]{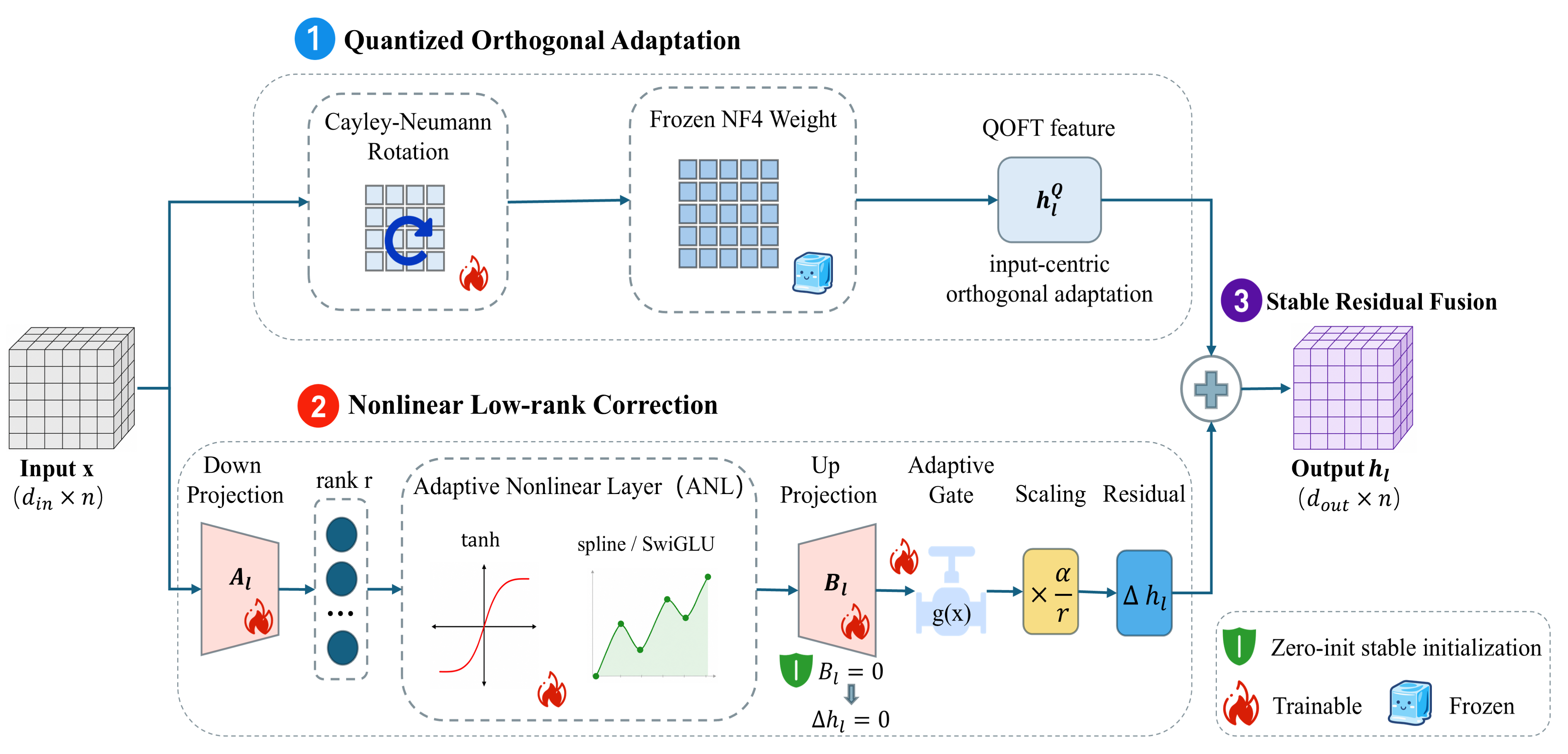}
    \caption{\method\ layer. A stable \qoft\ branch is augmented by a zero-start gated nonlinear low-rank residual.}
    \label{fig:methodology}
\end{figure*}

\section{Preliminaries}
\label{sec:prelim}

\paragraph{QOFT.}
Let $x\in\R^{d_{\rm in}}$ be the input activation of a linear layer with frozen quantized weight $W_{q,l}$. \qoft\ applies an orthogonal transformation to the activation side before the quantized linear map.
\begin{equation}
    h_l^{Q}
    =
    \operatorname{Dequant}(W_{q,l})^\top R_l^\top x .
    \label{eq:qoft}
\end{equation}
The rotation $R_l$ is parameterized by a skew-symmetric matrix through a Cayley transform. OFTv2 replaces the inverse in the Cayley transform with a truncated Neumann series, and the implementation used in our experiments follows the common setting of block size $32$ and five Neumann terms. The important point for \method\ is that the orthogonal adapter acts before the frozen NF4 weight. The quantized base weight is not rotated or modified.

\paragraph{AuroRA-style nonlinear low-rank mapping.}
LoRA uses a two-projector linear update.
\begin{equation}
    h = W_0 x + BAx ,
\end{equation}
which is still linear in $x$ for fixed $A$ and $B$. AuroRA inserts an adaptive nonlinear layer $\sigma(\cdot)$ between the two projectors.
\begin{equation}
    h = W_0 x + B\sigma(Ax).
\end{equation}
Its ANL combines fixed nonlinear activations and learnable B-spline components. \method\ adopts this nonlinear-low-rank principle, but it does not replace a LoRA branch. It uses the nonlinear mapping as a residual beside \qoft.

\section{Method}
\label{sec:method}

\subsection{AuroOFT Layer}

Figure~\ref{fig:methodology} shows the layer-level construction, and Algorithm~1 summarizes the corresponding forward computation. \method\ wraps an already adapted OFT/\qoft\ linear module rather than replacing it. The upper branch is the stable quantized orthogonal path inherited from \qoft. The lower branch is a nonlinear low-rank residual evaluated on the same activation and added only at the output.

For an adapted layer $l$, \method\ computes the following branch input and residual output.
\begin{equation}
    z_l = A_lx,\qquad
    \tilde z_l=\mathcal{N}_l(z_l),
    \label{eq:branch_input}
\end{equation}
\begin{equation}
    \begin{aligned}
    h_l &= h_l^Q + \Delta h_l,\\
    \Delta h_l
    &= G_l(\tilde z_l)\frac{\alpha}{r}B_l
    \operatorname{Drop}\!\left(\phi_l(\tilde z_l)\right),
    \end{aligned}
    \label{eq:aurooft}
\end{equation}
where $h_l^Q$ is the \qoft\ output in Eq.~\ref{eq:qoft}, $A_l\in\R^{r\times d_{\rm in}}$ is the down projection, $B_l\in\R^{d_{\rm out}\times r}$ is the up projection, $\mathcal{N}_l$ is either the identity map or FP32 RMS normalization, $\phi_l$ is an adaptive nonlinear layer in the compact rank-$r$ space, $\alpha/r$ is the residual scaling factor, and $G_l$ is a scalar, bounded scalar, or token-dependent gate. This matches the implementation. An already adapted OFT/\qoft\ module is wrapped, the nonlinear branch is evaluated on the same input activation, and its gated residual is added to the wrapped module output.

This parallel placement is the defining design choice. The nonlinear branch is not inside the Cayley-Neumann rotation and does not change the orthogonal parameterization. It provides an additional input-dependent correction after the \qoft\ feature has been computed. The zero-initialized up projection ensures that this additional path starts inactive, while the gate and $\alpha/r$ scaling regulate how the residual enters the layer during training.

\begin{figure}[t]
\refstepcounter{figure}\label{alg:aurooft_forward}
\centering
\begin{minipage}{0.96\columnwidth}
\small
\hrule
\vspace{2pt}
\noindent\textbf{Algorithm 1} \method\ Layer Forward Pass
\vspace{2pt}
\hrule
\vspace{3pt}
\begin{tabular}{@{}r@{\hspace{0.55em}}p{0.86\columnwidth}@{}}
1: & \textbf{Input.} activation $x$, wrapped \qoft\ layer $f_l^Q$, branch modules $\{A_l,\mathcal{N}_l,\phi_l,B_l,G_l\}$ \\
2: & \textbf{Output.} adapted feature $h_l$ \\
3: & $h_l^Q \leftarrow f_l^Q(x)$ \hfill \textit{stable QOFT branch} \\
4: & $z_l \leftarrow A_lx$ \hfill \textit{down projection} \\
5: & $\tilde z_l \leftarrow \mathcal{N}_l(z_l)$ \hfill \textit{identity or FP32 RMSNorm} \\
6: & $u_l \leftarrow \operatorname{Drop}(\phi_l(\tilde z_l))$ \hfill \textit{ANL in rank-$r$ space} \\
7: & $g_l \leftarrow G_l(\tilde z_l)$ \hfill \textit{scalar, bounded, or token gate} \\
8: & $\Delta h_l \leftarrow g_l \frac{\alpha}{r} B_lu_l$ \\
9: & $h_l \leftarrow h_l^Q+\Delta h_l$ \\
10: & \textbf{return} $h_l$ \\
11: & \textbf{Zero-start.} Initialize $B_l=0\Rightarrow\Delta h_l=0$, so the layer starts exactly as \qoft. \\
\end{tabular}
\vspace{2pt}
\hrule
\end{minipage}
\end{figure}

\subsection{Lite ANL}

The lightweight variant uses a two-tanh hidden transformation plus a learnable per-dimension residual scale.
\begin{equation}
    \phi_l^{\rm Lite}(z)
    =
    \tanh(H_l\tanh(z)) + s_l\odot\tanh(z),
    \label{eq:lite}
\end{equation}
where $z=\tilde z_l$, $H_l\in\R^{r\times r}$, and $s_l\in\R^r$. The code initializes $H_l$ as the identity and $s_l$ to a small positive value. This variant is computationally simple and isolates whether nonlinear hidden mixing alone is useful beyond \qoft.

\subsection{SplineNorm ANL}

The normalized spline variant follows the AuroRA motivation more closely while adapting it to the \method\ branch.
\begin{equation}
    \bar z = \tanh(\operatorname{RMSNorm}_{\rm FP32}(z)),
\end{equation}
\begin{equation}
    \phi_l^{\rm SplineNorm}(z)
    =
    \tanh(H_l\tanh(\bar z)) +
    C_l\Psi(\bar z),
    \label{eq:splinenorm}
\end{equation}
where $\mathcal{B}_j(\cdot)$ denotes the $j$-th B-spline basis response on the bounded interval $[-1,1]$, $\Psi_j(\bar z)=\sum_{i=1}^{r}\mathcal{B}_j(\bar z_i)$ is the implementation's rank-dimension aggregated spline feature, and $C_l\in\R^{r\times (M+p)}$ is the learnable spline coefficient matrix for grid size $M$ and order $p$. The FP32 RMS normalization and tanh bounding are used to keep low-dimensional activations within the spline grid. This is an adaptation of the AuroRA ANL idea, not a direct transplantation of its LoRA theory.

\subsection{Enhanced ANL and Gating}

The enhanced branch uses an FP32-normalized compact representation
\begin{equation}
    z = \operatorname{RMSNorm}_{\rm FP32}(A_lx),
\end{equation}
and combines two nonlinear bases.
\begin{align}
    u_t &= \tanh(P_{t,l}\tanh z), \\
    u_g &= \operatorname{SiLU}(P_{g,l}z)\odot\tanh(P_{v,l}z).
\end{align}
For the dual ANL, the final hidden output is
\begin{align}
    \phi_l^{\rm Dual}(z)
    &=
    \pi_{t,l}u_t+\pi_{g,l}u_g, \\
    [\pi_{t,l},\pi_{g,l}]
    &=
    \operatorname{softmax}(m_l).
    \label{eq:dual}
\end{align}
The code supports scalar, bounded, and token-dependent gates.
\begin{align}
    G_l(z)&=\eta_l,\\
    G_l(z)&=2\sigma(\gamma_l),\\
    G_l(z)&=2\sigma(W_{g,l}z),
    \label{eq:gates}
\end{align}
corresponding respectively to the unconstrained scalar, bounded scalar, and token-dependent gate. In the enhanced implementation, $\eta_l$ is initialized to one, while $\gamma_l=0$ and $W_{g,l}=0$ make the bounded and token gates initialize to one. This avoids the small-gate start that can suppress early gradients through the zero-initialized up projection.

\subsection{Layer Injection and Optimization}

\method\ supports all-layer injection or selective targets such as query/value and query/key/value plus MLP up/down projections. The \qoft\ branch and nonlinear residual use separate parameter groups, with branch-specific learning rate and gradient clipping. Parameters are identified by module identity rather than name, avoiding accidental capture of unrelated \qoft\ modules. Disabling \method\ recovers the same \qoft\ training interface.

\subsection{Structural Properties}
\label{sec:properties}

\begin{proposition}[Zero-start equivalence]
If $B_l=0$ for every nonlinear branch, then the initialized \method\ model computes exactly the same function as the corresponding \qoft\ model on every input sequence.
\end{proposition}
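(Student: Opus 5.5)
The plan is a layer-wise argument followed by induction over the network's computation graph. First fix an adapted layer $l$ and an arbitrary input activation $x$. By Eq.~\ref{eq:aurooft}, the output is $h_l = h_l^Q + \Delta h_l$ with $\Delta h_l = G_l(\tilde z_l)\frac{\alpha}{r}B_l\operatorname{Drop}(\phi_l(\tilde z_l))$. I will check that each factor multiplying $B_l$ is a finite quantity:
\begin{itemize}
\item the gate $G_l$ is one of the three forms in Eq.~\ref{eq:gates}, each a real scalar (per token);
\item $\alpha/r$ is a constant;
\item $\phi_l(\tilde z_l)$ is a finite vector in $\R^r$, since each ANL variant (Eqs.~\ref{eq:lite}, \ref{eq:splinenorm}, \ref{eq:dual}) is a composition of linear maps, tanh, SiLU, softmax mixing, and B-spline bases on a bounded interval.
\end{itemize}
Dropout then only rescales or zeros coordinates, whether in training or evaluation mode. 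With $B_l=0$ we get $B_l u = 0$ for every finite $u$, hence $\Delta h_l = 0$ and $h_l = h_l^Q = f_l^Q(x)$. This is exactly line 11 of Algorithm~1.

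The one step needing care is finiteness of the inputs to $B_l$. The only potentially singular operation is the FP32 RMS normalization $\mathcal N_l$, which divides by $\sqrt{\operatorname{mean}(z^2)+\epsilon}$. I will note that the standard $\epsilon>0$ keeps this finite even when $A_l x=0$, so no $0\cdot\infty$ or NaN can arise. I expect this to be the only genuine obstacle, and it amounts to stating that the implementation uses a positive stabilizer. Everything else in the branch is bounded or continuous and therefore finite. Randomness from dropout does not affect the conclusion, because the product with zero is zero for every mask realization.

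Next I lift the layer identity to the whole model. I will order the adapted layers along the forward computation of the network (attention and MLP blocks across depth and token positions). By hypothesis, the non-adapted components and the \qoft\ branches, with their rotations $R_l$ and frozen $W_{q,l}$, are shared with the \qoft\ model. The wrapper does not alter the wrapped module, as described in the Layer Injection subsection.

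Induction on this order then finishes the argument. If all inputs to layer $l$ agree with those of the \qoft\ model, the layer-wise identity gives equal outputs. All other operations (attention, normalizations, activations, residuals) are identical functions of identical inputs. Hence every intermediate tensor, and finally the logits, coincide on every input sequence.
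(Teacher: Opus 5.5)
Your proposal is correct and matches the paper's proof: $B_l=0$ forces $\Delta h_l=0$ at every adapted layer whatever $A_l$, $\phi_l$, the dropout mask, or the gate value is, and the resulting identity $h_l=h_l^Q$ then propagates through the network. Your finiteness check, that the RMSNorm stabilizer rules out $0\cdot\infty$ or NaN, and your explicit induction over the computation graph add rigor to what the paper leaves implicit in ``regardless of \ldots\ gate values'' and ``composes through the network.''
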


\begin{proof}
For each adapted layer, $B_l=0$ implies $\Delta h_l=0$ in Eq.~\ref{eq:aurooft}, regardless of $A_l$, $\phi_l$, dropout, or gate values. Hence $h_l=h_l^Q$ at every adapted layer, and the layer-wise equality composes through the network.
\end{proof}

\begin{proposition}[QOFT containment]
The function family induced by \method\ contains the corresponding \qoft\ family when the nonlinear residual is inactive.
\end{proposition}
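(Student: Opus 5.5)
The plan is to exhibit, for every \qoft\ configuration, an \method\ parameter setting that realizes the same network function. The construction is the one behind Proposition 1 (zero-start equivalence), now applied to arbitrary trained \qoft\ parameters rather than only to the initialization.

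First, I will fix the precise objects. A \qoft\ model is determined by its frozen quantized weights $\{W_{q,l}\}$, its rotations $\{R_l\}$ given by Cayley--Neumann parameters, and any non-adapted parameters. I define the \qoft\ family $\mathcal{F}_{Q}$ as the set of functions obtained by letting the skew-symmetric generators range over their admissible set. The \method\ family $\mathcal{F}_{A}$ has the same \qoft\ parameters together with branch parameters $\{A_l,\mathcal{N}_l,\phi_l,B_l,G_l\}$. Because \method\ wraps the \qoft\ module unchanged, $h_l^Q$ in Eq.~\ref{eq:aurooft} is literally the \qoft\ layer output for the same rotation parameters.

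Second, I take an arbitrary $f\in\mathcal{F}_Q$ and choose the \method\ parameters as follows: the same rotation parameters, $B_l=0$ for every adapted layer, and any admissible values for $A_l$, $\phi_l$ and $G_l$ (for instance the stated initializations). Then $\Delta h_l=G_l(\tilde z_l)\frac{\alpha}{r}B_l\operatorname{Drop}(\phi_l(\tilde z_l))=0$ for every input. This holds because $B_l=0$ annihilates any finite vector, and the gate, dropout mask, and ANL outputs are all finite: $\tanh$, SiLU and B-splines are bounded or finite, and the RMSNorm includes an epsilon. Hence $h_l=h_l^Q$ at each layer. Induction over depth, exactly as in the proof of Proposition 1, gives equality of the full network function, so $f\in\mathcal{F}_A$ and $\mathcal{F}_Q\subseteq\mathcal{F}_A$.

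The only step needing care is that "inactive" is meaningful and does not leak. I would note explicitly that dropout in training mode rescales by a finite factor and that the gate's output is finite, so $0\cdot(\text{finite})=0$ holds in both training and evaluation. I would also remark that alternative inactivations give the same conclusion: $\eta_l=0$ with the scalar gate works, while the bounded gate cannot reach $0$, so $B_l=0$ is the uniform choice. Finally, I would observe that the containment is generally strict, since a nonzero residual makes $h_l$ nonlinear in $x$ and so outside the orthogonal-linear family. The proposition asserts only containment, so this remark is not required for the proof.
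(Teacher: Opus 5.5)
Your proposal is correct and follows the paper's proof. Keep the \qoft\ rotation parameters and set $B_l=0$ at every adapted layer, so $\Delta h_l=0$ regardless of gate, dropout, or ANL; note, as the paper does, that this choice works for every gate type because the bounded and token-dependent gates cannot reach zero. Your finiteness checks make explicit what the paper leaves implicit, while your closing strictness remark is only heuristic at the network level and, as you say, not needed.
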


\begin{proof}
Any \qoft\ model can be represented by setting the nonlinear residual to zero, for example through $B_l=0$ at all adapted layers. Thus \qoft\ is a subfamily of \method. We use the up-projection condition because it holds for all implemented gate types, including bounded and token-dependent gates whose finite-parameter values are positive.
\end{proof}

\begin{remark}[What the containment does not prove]
Containment is only an expressivity statement. Under ideal global optimization, the best empirical objective is no worse than \qoft. It does not guarantee finite-step accuracy gains or justify test-set-based selection.
\end{remark}

\paragraph{Orthogonality boundary.}
The main branch retains \qoft's approximate orthogonal structure, but the full \method\ layer adds an input-dependent nonlinear residual and is not an orthogonal transformation. We therefore do not claim global angle, norm, or forgetting guarantees for the combined mapping.

\paragraph{Non-mergeability and cost.}
Because $\Delta h_l$ is nonlinear in the current activation, the residual cannot generally be merged into a static post-training weight matrix. Unlike linear LoRA-style adapters, \method\ therefore adds inference cost from projection, nonlinear computation, gating, and up projection.

\begin{table*}[t]
\centering
\small
\setlength{\tabcolsep}{2.2pt}
\begin{tabular}{@{}llcccccccc@{}}
\toprule
Model & Type & Params & AMC23 & AQUA & CMATH & GaoKao & Minerva & Olymp. & SAT \\
\midrule
\multirow{4}{*}{Qwen2.5-1.5B-it} & Baseline & - & 17.50 & 49.20 & 65.20 & 36.40 & 9.60 & 12.00 & 59.40 \\
& QLoRA & 18.46M & 15.00 & 42.50 & 61.50 & 29.60 & 8.10 & 8.90 & 59.40 \\
& \qoft & 7.89M & 27.50 & 53.10 & 68.50 & 41.00 & 11.80 & 14.40 & 81.20 \\
& \method & 10.20M & 27.50 & 55.70 & 72.20 & 41.30 & 11.80 & 15.60 & 87.70 \\
\midrule
\multirow{4}{*}{Qwen2.5-1.5B} & Baseline & - & 0.00 & 18.90 & 4.00 & 4.20 & 2.60 & 2.40 & 28.10 \\
& QLoRA & 18.46M & 15.00 & 37.40 & 64.20 & 26.80 & 8.50 & 6.80 & 62.50 \\
& \qoft & 7.89M & 22.50 & 53.10 & 56.30 & 36.10 & 8.50 & 12.70 & 87.50 \\
& \method & 10.20M & 23.50 & 46.70 & 69.40 & 38.70 & 8.80 & 13.10 & 87.50 \\
\midrule
\multirow{4}{*}{Qwen2.5-3B-it} & Baseline & - & 20.30 & 34.20 & 42.50 & 55.60 & 13.20 & 20.10 & 59.60 \\
& QLoRA & 24.73M & 24.30 & 30.50 & 85.90 & 52.70 & 14.60 & 15.30 & 66.20 \\
& \qoft & 13.00M & 45.00 & 31.10 & 84.30 & 52.50 & 20.20 & 27.00 & 71.90 \\
& \method & 16.74M & 35.00 & 35.00 & 85.30 & 56.60 & 23.20 & 27.30 & 65.60 \\
\midrule
\multirow{4}{*}{Qwen2.5-3B} & Baseline & - & 17.60 & 29.80 & 39.60 & 28.30 & 5.60 & 17.50 & 48.20 \\
& QLoRA & 24.73M & 27.40 & 28.50 & 77.10 & 48.80 & 12.40 & 13.60 & 61.30 \\
& \qoft & 13.00M & 42.50 & 30.40 & 85.20 & 50.30 & 20.10 & 26.80 & 68.50 \\
& \method & 16.74M & 43.30 & 34.20 & 87.60 & 54.60 & 22.10 & 29.70 & 68.60 \\
\midrule
\multirow{4}{*}{Qwen2.5-7B-it} & Baseline & - & 50.00 & 16.50 & 89.30 & 61.80 & 33.50 & 36.60 & 53.10 \\
& QLoRA & 40.37M & 30.00 & 48.00 & 88.80 & 50.10 & 25.40 & 19.70 & 68.80 \\
& \qoft & 17.55M & 52.50 & 70.90 & 90.50 & 63.60 & 33.50 & 37.60 & 96.90 \\
& \method & 22.60M & 55.60 & 46.80 & 88.20 & 58.10 & 26.30 & 39.80 & 96.80 \\
\midrule
\multirow{4}{*}{Qwen2.5-7B} & Baseline & - & 25.00 & 55.10 & 61.20 & 42.90 & 11.80 & 29.90 & 71.90 \\
& QLoRA & 40.37M & 35.00 & 48.80 & 73.70 & 49.90 & 18.80 & 18.50 & 62.50 \\
& \qoft & 17.55M & 52.50 & 59.40 & 80.70 & 55.60 & 21.70 & 34.70 & 87.50 \\
& \method & 22.60M & 53.70 & 44.30 & 86.90 & 56.80 & 24.70 & 36.20 & 92.00 \\
\bottomrule
\end{tabular}
\caption{Pass@1 mathematical reasoning results across Qwen2.5 model scales.}
\label{tab:main_results}
\end{table*}

\begin{table*}[t]
\centering
\small
\setlength{\tabcolsep}{2.5pt}
\begin{tabular}{@{}llcccccccc@{}}
\toprule
Variant & Branch Setting & Params & AMC23 & AQUA & CMATH & GaoKao & Minerva & Olymp. & SAT \\
\midrule
Matched \qoft & no nonlinear branch & 13.00M & 42.50 & 30.40 & 85.20 & 50.30 & 20.10 & 26.80 & 68.50 \\
\method-Lite & tanh, scalar gate, all layers & 16.74M & 43.00 & 31.30 & 86.00 & 51.70 & 20.80 & 27.70 & 68.70 \\
\method-SplineNorm & RMS-bounded B-spline ANL & 16.76M & 43.20 & 31.80 & 86.40 & 52.00 & 21.20 & 28.10 & 68.40 \\
\method-Enhanced-r2 & dual ANL, bounded/token gate & 16.74M & 43.30 & 34.20 & 87.60 & 54.60 & 22.10 & 29.70 & 68.60 \\
\method-Enhanced-r4 & dual ANL, bounded/token gate & 20.49M & 43.80 & 34.60 & 87.90 & 54.90 & 22.40 & 29.90 & 68.30 \\
\bottomrule
\end{tabular}
\caption{AuroOFT ablation results on Qwen2.5-3B. Only AuroOFT-specific residual-branch components are varied.}
\label{tab:ablations}
\end{table*}

\paragraph{Parameter cost.}
For the Lite variant, each adapted layer adds the following trainable parameters beyond the \qoft\ branch in that layer.
\begin{equation}
    rd_{\rm in}+r^2+rd_{\rm out}+r+\kappa_g
    \label{eq:lite_params}
\end{equation}
parameters, corresponding to $A_l$, $H_l$, $B_l$, the Lite residual scale $s_l$, and the chosen gate. SplineNorm replaces $s_l$ with the spline coefficients and therefore adds
\begin{equation}
    rd_{\rm in}+r^2+rd_{\rm out}+r(M+p)+\kappa_g
    \label{eq:spline_params}
\end{equation}
parameters for grid size $M$, order $p$, and the chosen gate. For the enhanced dual-ANL variant, the branch adds
\begin{equation}
    rd_{\rm in}+rd_{\rm out}+3r^2+2+\kappa_g
    \label{eq:enhanced_params}
\end{equation}
parameters, where the three $r^2$ terms are the tanh, SwiGLU gate, and SwiGLU value projections, the two additional parameters are the dual mixing logits, and $\kappa_g$ is $1$ for scalar or bounded scalar gating and $r$ for token-dependent gating. In Eq.~\ref{eq:spline_params}, $r(M+p)$ corresponds to the learnable spline coefficient matrix with output rank $r$ and $M+p$ basis responses. In all variants, the dominant additional computation scales as $O(rd_{\rm in}+r^2+rd_{\rm out})$ per token and adapted layer, with a small spline-basis overhead for SplineNorm.

\begin{figure*}[!t]
    \centering
    \makebox[\textwidth][c]{\includegraphics[width=\textwidth]{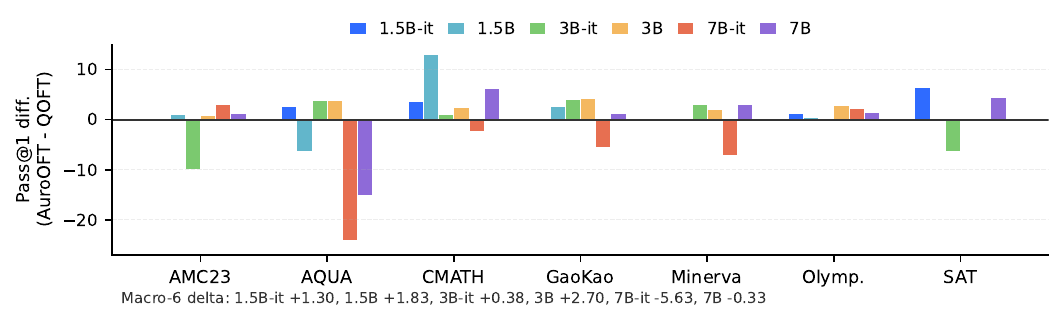}}
    \captionof{figure}{Per-benchmark Pass@1 differences between \method\ and \qoft. Positive bars indicate gains from the nonlinear residual branch.}
    \label{fig:per_benchmark_gains}
    \vspace{2pt}
    \begin{minipage}[t]{.48\textwidth}
    \vspace{0pt}
    \centering
    \scriptsize
    \setlength{\tabcolsep}{3pt}
    \begin{tabular}{@{}llccc@{}}
    \toprule
    Model & Method & Ctx. & Params & Saved Params vs. QLoRA \\
    \midrule
    1.5B-it & QLoRA & 16k & 18.46M & ref. \\
    1.5B-it & \method & 16k & 10.20M & 8.26M \\
    \midrule
    3B-it & QLoRA & 16k & 24.73M & ref. \\
    3B-it & \method & 16k & 16.74M & 7.99M \\
    \midrule
    7B-it & QLoRA & 8k & 40.37M & ref. \\
    7B-it & \method & 8k & 22.60M & 17.77M \\
    \bottomrule
    \end{tabular}
    \captionof{table}{Trainable-parameter savings against QLoRA.}
    \label{tab:efficiency_appendix}
    \end{minipage}\hfill
    \begin{minipage}[t]{.48\textwidth}
    \vspace{0pt}
    \centering
    \includegraphics[width=.98\linewidth]{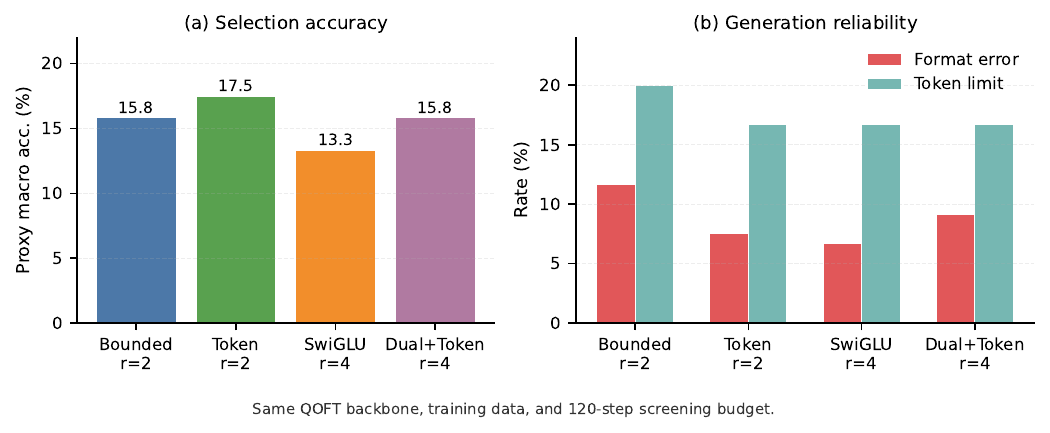}
    \captionof{figure}{Proxy-development diagnostics.}
    \label{fig:proxy_branch_diagnostics}
    \end{minipage}
\end{figure*}
\section{Experiments}
\label{sec:experiments}

\subsection{Experimental Setup}

\noindent\textbf{Evaluation Principle.} The key question is whether the nonlinear residual improves over \qoft\ under matched adaptation conditions. Unless otherwise stated, methods share the same model, data, sample order, optimizer, length, quantization, adapter placement, decoding, and parser, so differences isolate the residual branch.

\noindent\textbf{Training Setup.} We use Qwen2.5 models with NF4, OpenR1-Math data, effective batch size 32, OFT block size 32, five Cayley-Neumann terms, and length 16,384 when feasible. A matched manifest fixes shared \qoft\ fields. Only residual-branch variables vary.

\noindent\textbf{Benchmarks and Metrics.} We evaluate AMC23, AQUA, CMATH, GaoKao, Minerva Math, OlympiadBench, and SAT Math. Macro-6 averages the six non-SAT benchmarks and serves as the main aggregate. SAT Math is retained as a protocol-sensitive diagnostic. We report official accuracy plus sample counts, formatting failures, timeouts, parameters, memory, time, and throughput.

\noindent\textbf{Selection Protocol.} Model selection uses a leakage-audited proxy set drawn from OpenR1 samples outside the 50k training subset and deduplicated against final tests by normalized text hashes. Test benchmarks do not choose rank, gates, ANL type, branch learning rate, or targets. After freezing a configuration, matched \qoft\ and \method\ runs use the same seed policy, decoding setup, and parser.

\subsection{Main Experimental Results}
Table~\ref{tab:main_results} reports Pass@1 across Qwen2.5 scales for the frozen baseline, QLoRA, \qoft, and \method\ under the same benchmark columns.

\method\ shows its clearest structural gains in the 1.5B and 3B settings. Macro-6 improves over matched \qoft\ by 1.30, 1.83, 0.38, and 2.70 points on 1.5B-it, 1.5B, 3B-it, and 3B, with non-degraded metrics on 7/7, 6/7, 5/7, and 7/7 benchmarks. Relative to QLoRA, \method\ gains 6.52--10.62 Macro-6 points while using fewer trainable parameters. Larger models remain task-dependent, with AQUA and SAT especially sensitive to scale and protocol.

\subsection{Further Analysis}

\noindent\textbf{(I) Ablation Study.}
Table~\ref{tab:ablations} isolates AuroOFT-specific choices while fixing the base model, quantization, \qoft\ branch, data, optimizer, decoding, and parser. The rows compare no branch, Lite tanh, bounded SplineNorm, and enhanced dual tanh/SwiGLU variants. Rank/gate rows test added residual capacity under the same evaluation protocol.

\smallskip
\noindent\textbf{(II) Diagnostic Bar-Chart Analysis.}
Figure~\ref{fig:per_benchmark_gains} shows per-benchmark \method-\qoft\ differences. Gains concentrate in the 1.5B and 3B settings, while 7B-it is mixed and SAT remains diagnostic. Figure~\ref{fig:proxy_branch_diagnostics} reports proxy-dev screening, where token-gated rank-2 tanh gives the highest proxy accuracy and Dual-ANL balances capacity with reliability without using final tests for selection.

\smallskip
\noindent\textbf{(III) Parameter Efficiency Analysis.}
\method\ also uses adapter parameters more efficiently than QLoRA. Table~\ref{tab:efficiency_appendix} shows savings of 8.26M, 7.99M, and 17.77M parameters on 1.5B-it, 3B-it, and 7B-it, or 44.7\%, 32.3\%, and 44.0\%. The saved budget can support more task-specialized adapters while keeping correction capacity compact.

\smallskip
\noindent\textbf{(IV) Training Dynamics Analysis.}
Figure~\ref{fig:training_dynamics} checks optimization stability. \method\ starts from the same loss as \qoft\ and follows a comparable matched-training trajectory. The 120-step variants are diagnostics, not accuracy evidence.

\begin{center}
    \includegraphics[width=.72\linewidth]{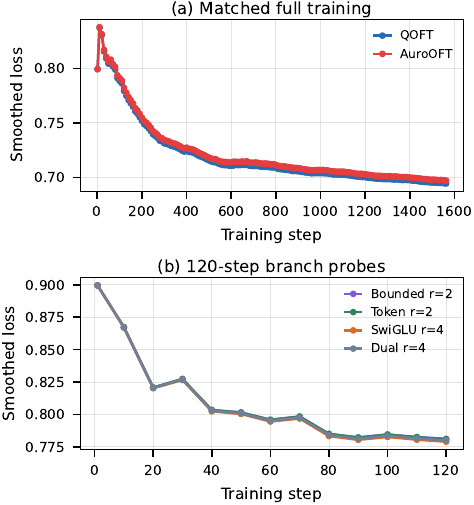}
    \captionof{figure}{Training dynamics under matched adaptation.}
    \label{fig:training_dynamics}
\end{center}

\begin{figure*}[!t]
    \centering
    \includegraphics[width=\textwidth]{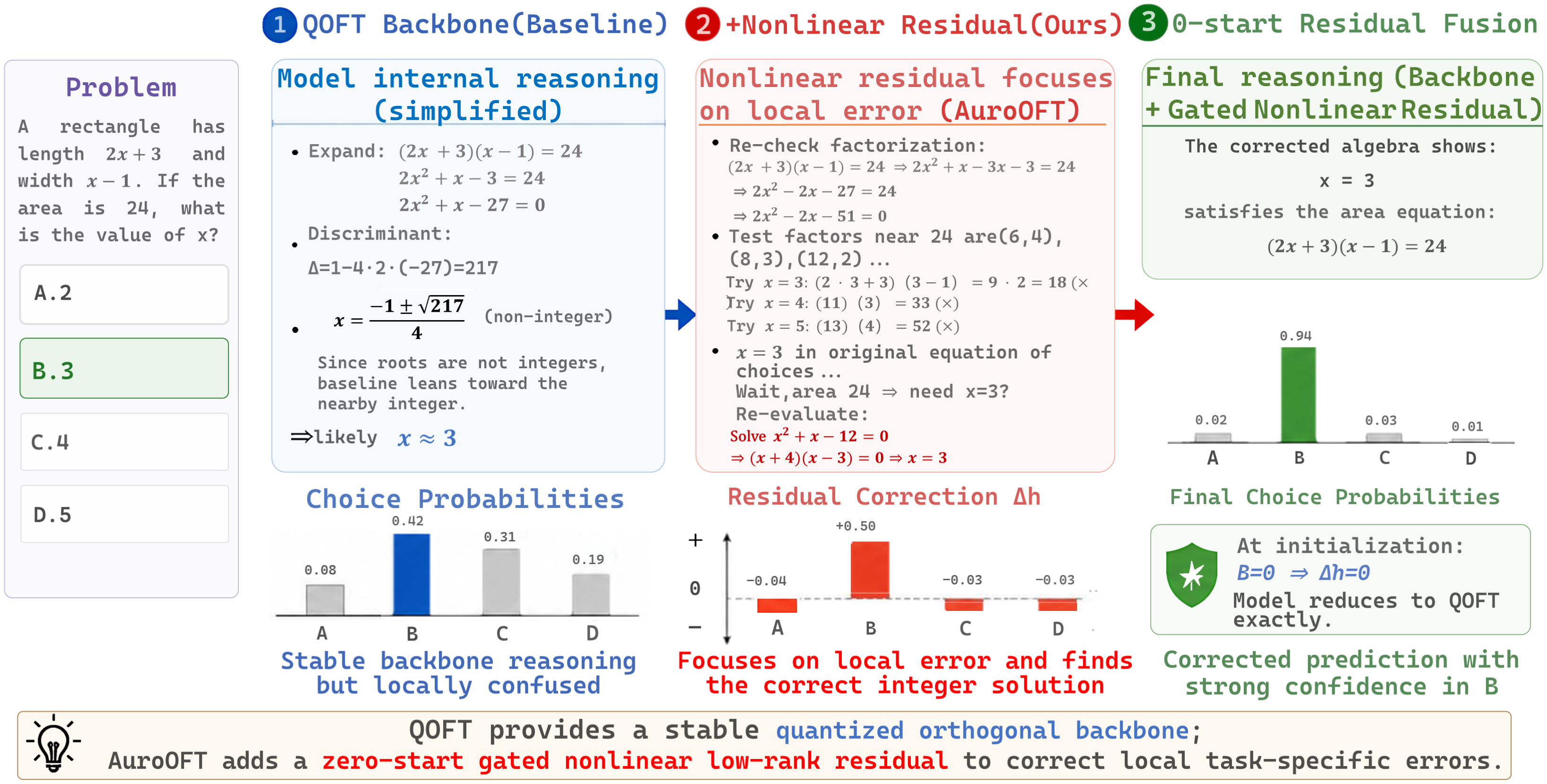}
    \caption{Qualitative case study on a GSM8K-style multiple-choice problem. The example illustrates local correction by the zero-start nonlinear residual.}
    \label{fig:case_study_appendix}
\end{figure*}

\smallskip
\noindent\textbf{(V) Case Study.}
Figure~\ref{fig:case_study_appendix} gives a qualitative GSM8K-style multiple-choice example. The \qoft\ branch supplies the stable quantized backbone, while the zero-start gated residual corrects a local algebraic error and shifts probability toward the correct option in the illustrated choice set.

\section{Discussion}

\noindent\textbf{Orthogonality as a Backbone, Not a Constraint}
\par\vspace{1pt}\noindent
\method\ treats orthogonality as a stable low-bit backbone, not a global constraint. The \qoft\ branch preserves the quantization-compatible activation rotation, while the nonlinear residual handles corrections that a rotation may express inefficiently. Only the main branch is orthogonal. The full mapping is not.

\noindent\textbf{Why a Parallel Nonlinear Residual?}
\par\vspace{1pt}\noindent
AuroRA and LoRAN motivate nonlinear low-rank mappings for LoRA-style updates. AuroOFT places the nonlinear map beside \qoft. The wrapped OFT/\qoft\ module gives the backbone output, while a zero-start gated residual adds correction without disturbing the Cayley-Neumann rotation or frozen NF4 path. Its effect is isolated by initialization, gating, and ablation.

\noindent\textbf{Implementation Choices for Stable Expressivity}
\par\vspace{1pt}\noindent
The implementation is built around controlled expressivity. Residual parameters are separated from \qoft, injection can be selective, and Lite, SplineNorm, and Enhanced ANL variants share one wrapper. RMS-normalized activations, bounded/token gates, and target choices regulate capacity, while ablations vary basis, normalization, rank, and gating with the carrier fixed.

\noindent\textbf{How to Interpret the Empirical Evidence}
\par\vspace{1pt}\noindent
Matched evidence keeps the interpretation grounded. Comparisons with \qoft\ isolate structural capacity beyond the orthogonal carrier, while comparisons with QLoRA test parameter efficiency under the same model scale. Macro-6 and per-benchmark matched results support method-level interpretation. SAT Math is retained only to expose protocol sensitivity, not to serve as a standalone structural claim.

\section{Limitations}

\noindent\textbf{Structural boundary.} \method\ uses orthogonality as a stable carrier, but only the \qoft\ branch remains orthogonal. The input-dependent residual cannot generally be merged into frozen quantized weights, so AuroOFT should be viewed as a controlled expressivity extension rather than a zero-overhead replacement for QOFT.

\noindent\textbf{Evidence boundary.} Claims for \method\ require matched base models, data order, training recipes, decoding, and parsers. SAT Math is useful but small and prompt-sensitive. Broader validation should test more domains, scales, seeds, and latency-sensitive deployments.

\section{Conclusion}

We presented \method, a zero-start nonlinear residual augmentation for quantized orthogonal fine-tuning. It keeps \qoft\ as the stable low-bit backbone and adds gated low-rank nonlinear correction through a parallel residual path. This design preserves the QOFT starting function, keeps the orthogonal carrier fixed, and makes added capacity attributable. Matched results show improved adaptation over QOFT in the 1.5B and 3B settings, and stronger Macro-6 accuracy than QLoRA with fewer trainable parameters.

\bibliography{aaai2027}

\end{document}